\pgfplotsset{compat=1.18} 
\newcommand{\interaction}{\mathcal{I}}
\newcommand{\ind}{\perp\!\!\!\!\perp}
\newcommand{\ourmethod}{i$\textsc{Shap}$\xspace}
\newcommand{\ourmethodE}{\ourmethod-Exact\xspace}
\newcommand{\ourmethodG}{\ourmethod-Greedy\xspace}
\newcommand{\shap}{$\textsc{Shap}$\xspace}
\newcommand{\lime}{$\textsc{Lime}$\xspace}
\newcommand{\nshap}{$\textsc{nShap}$\xspace}
\newcommand{\Do}{\textit{do}}
\newenvironment{proof}{\paragraph{Proof:}\itshape}{\hfill$\square$ \\}
\def\E{{\mathrm{E}}\,}
\def\P{{\Pi}}
\renewcommand*{\@fnsymbol}[1]{\ensuremath{\ifcase#1\or   \circ\or \bullet\or *\or \ddagger\or
		\mathsection\or \mathparagraph\or \|\or **\or \dagger\dagger
		\or \ddagger\ddagger \else\@ctrerr\fi}}
\newcounter{inlineequation}
\definecolor{ind-effect}{HTML}{1B77B8}
\definecolor{pos-effect}{HTML}{25A012}
\definecolor{neg-effect}{HTML}{E05263}
\definecolor{n1}{HTML}{1B77B8}
\definecolor{n2}{HTML}{FF7E00}
\definecolor{n3}{HTML}{25A012}
\definecolor{n4}{HTML}{D82520}
\definecolor{n5}{HTML}{9567C1}
\definecolor{n6}{HTML}{8D5649}
\definecolor{n7}{HTML}{E476C5}
\definecolor{n8}{HTML}{7F7F7F}
\definecolor{n9}{HTML}{BCBD00}
\definecolor{n10}{HTML}{00BED1}
\pgfplotsset{
	my legend/.style={
		mark=square*,
		fill,
		mark options={scale=4, fill opacity=0.5}
	}
}
\pgfplotsset{
	my boxplot/.style={
		eda boxplot,
		boxplot/draw direction=y,
		boxplot={
			box extend=0.2
		},
		height = 3.2cm,
		width = \linewidth,
		x label style 		= {at={($(axis description cs:0.5,0) + (0,-6pt)$)}, font=\scriptsize},
		y label style 		= {at={($(axis description cs:0,0.5) + (-16pt,5pt)$)}, font=\scriptsize},
		ymax = 1,
        ytick={0,0.2,0.4,0.6,0.8,1},
		outer sep=0pt,
	}
}
\pgfplotsset{
	my box/.style={
		solid,
		mark=*,
		mark options={scale=0.5},
		fill opacity=0.4,
		fill
	}
}
\begin{document}

\title{Succinct Interaction-Aware Explanations}

\author{Sascha Xu, Joscha Cüppers, Jilles Vreeken \\
\\
\emph{CISPA Helmholtz Center for Information Security} \\
Saarbrücken, Germany \\
\{sascha.xu, joscha.cueppers, vreeken\}@cispa.de  \\
}

\maketitle
\hypersetup{
  pdfinfo={Title={Succinct Interaction-Aware Explanations},Author={Sascha Xu, Joscha Cüppers, Jilles Vreeken},Keywords={Explainability, Shapley Values, Interaction Index, Post-hoc}}
}

\begin{abstract}
    \shap is a popular approach to explain black-box models by revealing the importance of individual features.
    As it ignores feature interactions, \shap explanations can be confusing up to misleading. \nshap, on the other hand, reports the additive importance for all subsets of features. While this does include all interacting sets of features, it also leads to an exponentially sized, difficult to interpret explanation. In this paper, we propose to combine the best of these two worlds, by partitioning the features into parts that significantly interact, and use these parts to compose a succinct, interpretable, additive explanation.
    We derive a criterion by which to measure the representativeness of such a partition for a models behavior, traded off against the complexity of the resulting explanation.
    To efficiently find the best partition out of super-exponentially many, we show how to prune sub-optimal solutions using a statistical test, which not only improves runtime but also helps to detect spurious interactions. Experiments on synthetic and real world data show that our explanations are both more accurate resp. more easily interpretable than those of \shap and \nshap. %
\end{abstract}

\section{Introduction}

Decision processes must be fair and transparent regardless of whether it is driven by a human or an algorithm. 
Post-hoc explainability methods offer a solution as they can generate explanations that are independent of the underlying model $f$, and are hence also applicable to powerful black-box machine learning models.
One of the most popular post-hoc approaches is \shap \citep{lundberg:17:shap}, which provides intuitive explanations for a decision $f(x)$ 
of an arbitrary model $f$ for an individual $x$ in terms of how much a specific input value $x_i$ contributes to the outcome $f(x)$.
Additive explanations over \textit{single features} are succinct and easily understandable, but, only reliable when the underlying model is indeed additive. 
Whenever there are interactions between features in the model this can lead to misleading results~\citep{gosiewska:19:nottrustadditive}. 

\begin{figure*}[t]
    \begin{minipage}[]{0.32\textwidth}
        \centering
        \begin{tikzpicture}
    \begin{axis}[	
        pretty ybar stacked,
		height          = 5cm,
		width           = \linewidth,
		bar width       = .1cm,
		ymin = -1100,
		ymax = 1100,
		symbolic x coords = {season,mnth,holiday,weekday,workingday,weathersit,temp,atemp,hum,windspeed},
		xticklabel style={anchor= north east, rotate=50, yshift=0.2em, xshift=0.3em, font=\tiny},
		xticklabels = {season:4,mnth:10,holiday:0,weekday:6,workingday:0,weathersit:1,temp:0.39,atemp:0.39,hum:0.49,windspeed:0.15},
		yticklabel style={font=\tiny},
		ylabel style={yshift=1.6ex},
		cycle list name = nshap-double,
		ylabel          = {Rented Bikes},
		title ={\scriptsize \shap-\emph{Values: $10$ Features}},
		grid=none,
		legend style    = {
			at      = {(0.5,-0.20)},
			anchor  = north,
			legend columns=-1},
		]
		\draw[shorten >=-5pt, shorten <=-5pt] (axis cs:season,0) --  (axis cs:windspeed,0);
        \pgfplotsinvokeforeach{1}{ 
			\addplot table[y=Individual Effect, x=Features, col sep = semicolon] {expres/bike-shap.csv}; 
        }
	\end{axis}    
\end{tikzpicture}
    \end{minipage}
    \hfill
    \begin{minipage}[]{0.32\textwidth}
        \centering
        \vspace{-0.1cm}
        \begin{tikzpicture}
    \begin{axis}[	
        pretty ybar stacked,
		height          = 5cm,
		width           = \linewidth,
		bar width       = .1cm,
		ymin = -1100,
		ymax = 1100,
		symbolic x coords = {y1,y3,y4,y5,y6,y7,y2},
		xticklabels     = {holiday:0.00,mnth:10,atemp,weathersit,workingday},
		xticklabels = {Interaction 1,Interaction 2,mnth:10,holiday:0,workingday:0,weathersit:1,atemp:0.39},
		xticklabel style={anchor= north east, rotate=50, yshift=0.2em, xshift=0.3em, font=\tiny, align=right},
		cycle list name = ishap-stacked-ybar,
		yticklabel style={font=\tiny},
		title ={\scriptsize \ourmethod-\emph{Values: $7$ Parts + Interactions}},
		grid=none,
		legend style    = {
			font			= {\scriptsize},
			nodes			= {scale=0.8},
			cells			= {scale=0.2},
			draw=none, fill=none,
			at      = {(1,1)},
			anchor  = north east,
			legend columns=1},
		]
		\draw[shorten >=-5pt, shorten <=-5pt] (axis cs:y1,0) --  (axis cs:y2,0);
		\addlegendimage{my legend,ind-effect}
		\addlegendentry{Sum of Individual Effects}
		\addlegendimage{my legend,pos-effect}
		\addlegendentry{Interaction Effect}
        \pgfplotsinvokeforeach{1,...,3}{ 
            \addplot table[y=x#1, x=y, col sep = semicolon, on layer=background layer] {expres/bike-ishap.csv}; 
        }
		\node [draw,fill=white, scale=0.8, thin] at (rel axis cs: 0.17,0.3) {\tiny
			\shortstack[l]{
			hum:0.49\\
			season:4 \\
			temp:0.39}
		};
		\draw[thin] (rel axis cs:0.17,0.33) -- (rel axis cs:0.085,0.5);
		\node [draw,fill=white, scale=0.8, thin] at (rel axis cs: 0.78,0.32) {\tiny
			\shortstack[l]{
			weekday:6\\
			windspeed:0.15
			}
			};
		\draw[thin] (rel axis cs:0.8,0.32) -- (rel axis cs:0.93,0.5);
	\end{axis}
\end{tikzpicture}
    \end{minipage}
    \hfill
\begin{minipage}[]{0.32\textwidth}
	\centering
    \vspace{0.4cm}
	\begin{tikzpicture}
    \begin{axis}[	
        pretty ybar stacked,
		height          = 5cm,
		width           = \linewidth,
		bar width       = .1cm,
		ymin = -1100,
		ymax = 1100,
		symbolic x coords = {season,mnth,holiday,weekday,workingday,weathersit,temp,atemp,hum,windspeed},
		xticklabels = {season:4,mnth:10,holiday:0,weekday:6,workingday:0,weathersit:1,temp:0.39,atemp:0.39,hum:0.49,windspeed:0.15},
		xticklabel style={anchor= north east, rotate=50, yshift=0.2em, xshift=0.3em, font=\tiny},
		cycle list name = nshap-double,
		yticklabel style={font=\tiny},
		title ={\scriptsize \nshap-\emph{Values: $2^{10}$ Feature Interactions}},
		grid=none,
		legend style    = {
			font			= {\scriptsize},
			nodes			= {scale=0.8},
			cells			= {scale=0.2},
			draw=none, fill=none,
			at      = {(1.1,1)},
			anchor  = north west,
			legend columns=1},
		]
		\draw[shorten >=-5pt, shorten <=-5pt] (axis cs:season,0) --  (axis cs:windspeed,0);
		\addlegendimage{empty legend}
		\addlegendentry{\hspace{-.3cm}\tiny\textbf{Order}}
		\addlegendimage{my legend,n1}
		\addlegendentry{\tiny 1}
		\addlegendimage{my legend,n2}
		\addlegendentry{\tiny 2}
		\addlegendimage{my legend,n3}
		\addlegendentry{\tiny 3}
		\addlegendimage{my legend,n4}
		\addlegendentry{\tiny 4}
		\addlegendimage{my legend,n5}
		\addlegendentry{\tiny 5}
		\addlegendimage{my legend,n6}
		\addlegendentry{\tiny 6}
		\addlegendimage{my legend,n7}
		\addlegendentry{\tiny 7}
		\addlegendimage{my legend,n8}
		\addlegendentry{\tiny 8}
		\addlegendimage{my legend,n9}
		\addlegendentry{\tiny 9}
		\addlegendimage{my legend,n10}
		\addlegendentry{\tiny 10}

        \pgfplotsinvokeforeach{1,...,10}{ 
			\addplot+ table[y=Order#1-pos, x=Features, col sep = semicolon] {expres/bike-nshap.csv}; 
			\addplot table[y=Order#1-neg, x=Features, col sep = semicolon] {expres/bike-nshap.csv}; 
        }

	\end{axis}    
\end{tikzpicture}
\end{minipage}
    \vspace{-0.6cm}
    \caption{Comparison of \shap (left), our approach \ourmethod (middle) and \nshap (right) on the Bike Sharing dataset \citep{fanaee:12:bikedataset}.
    \shap does not inform about any interactions, while \nshap overflows with information. \ourmethod provides a concise explanation for the high predicted demand:
    its is a warm and dry day for winter (\texttt{Season:4}, \texttt{Hum:0.49} and \texttt{Temp:0.39}) and a Saturday with little wind (\texttt{Weekday}:6 and \texttt{Windspeed}:0.15).}
    \label{fig:comparison}
\end{figure*}

Interaction index explanations \citep{lundberg:20:treeshap,sundararajan:20:shapleyinteraction,tsai:23:faithshap} 
address this weakness by considering \textit{groups of features} $x_S$ with an non-additive effect on the prediction $f(x)$.
\nshap \citep{bordt:23:shapleygam} is a recent interaction-based approach that decomposes a prediction $f(x)$ into a generalized additive model $\sum_{S \subset [d]} \Phi_S^n$,
where each combination of features, of which there are exponentially many, is assigned an additive contribution. Although this permits modeling any interaction, the size of the explanation makes it arduous to compute, 
and even harder to interpret and make use of.

To illustrate, we show the explanations of \shap, \nshap, and our method, \ourmethod for a bike rental prediction. On the left of Fig.~\ref{fig:comparison} we give the explanation that \shap provides. It is easy to understand, but counterintuitive: winter and humidity are listed as beneficial, while medium temperature is identified as a negative factor for bike rentals. 
On the right, we show the \nshap explanation for all 1024 subsets of features.
The simplicity of \shap is exchanged for an overflow of detail;
for example, \texttt{Temperature} has now equally strong positive and negative $n^{\mathit{th}}$ order interactions and it is hard to say which are truly important.
In fact, behind the visually appealing explanation there is a total of 751 non-zero \nshap values to be considered.

In the middle, we show our proposed \ourmethod explanation. It partitions the feature set into parts that significantly interact, and provides an additive explanation over these.
\ourmethod immediately reveals that there are two interactions responsible for the high predicted demand:
it is a dry and relatively warm day for winter (\texttt{Season:4}, \texttt{Hum:0.49} and \texttt{Temp:0.39}) and a Saturday with little wind (\texttt{Weekday:6} and \texttt{Windspeed:0.15}).

In the following, we outline the theory and algorithm behind \ourmethod to combine the best of both worlds: 
we propose to partition the feature set into parts that interact strongly, and use them to
compose an interpretable, additive explanation.
We formalize this by deriving an objective function for the ideal partition of an additive explanation.
The objective is inspired by game theory and seeks to find those coalitions of players, which together best approximate the full game as an additive function.
The main hurdle is on the computational side due to the combinatorial explosion of the number of possible partitions.
To this end, we devise a statistical test to prune the search space by testing pairwise for significant interactions.
We then show how to construct an additive explanation from the partition, and test them on various benchmarks against both additive and interaction based explanations.

\section{Theory}
\label{sec:theory}
We consider a machine learning model $f: \mathcal{X} \to \mathbb{R}$ with a domain $\mathcal{X}$ over $d$ univariate random variables
$X_1$ to $X_d$.
We denote a subset of input variables by $X_S$, where $S$ is the index set and denote the set of all indices as $[d] = \{1,...,d\}$.
We define an explanation as a set of tuples $\{(S_i,e_i)\}$ where $S_i$ is an index set with an 
explanatory value $e_{i}$ to the prediction $f(x)$.
For example, \shap explains using singletons $S_i = \{i\}$, where $e_i$ are the Shapley values.
\nshap on the other hand uses all sets $S_i \subset [d]$ from the power set of features.

We view the local prediction $f(x)$ as a coalition game, where a set of players $x_S$ receive a payoff $v(S)$ defined by a value function $v : 2^d \to \mathbb{R}$,
which we then analyze to determine the contribution of each coalition, made up of individual features, to the prediction.
W.l.o.g.~we assume that the value function is normalized, i.e.~$v(\emptyset) = 0$, which can be achieved by pre-processing $f$ so that $E[f(X)]=0$.
In the context of machine learning, two main variants of value functions are used: the observational value function 
$v(S;f,x) = \E\left[f(X)|X_S=x_S\right]$ \citep{lundberg:17:shap}, and the interventional value function $v(S;f,x) = \E\left[f(X)|\Do(X_S=x_S)\right]$ \citep{janzing:20:causalshapley}.
Our method is based directly on $v$, which can then be instantiated with either distribution.
In the following, we omit the specific instance of $f$ and $x$ and refer to the value function simply as $v(S)$. 
\subsection{Objective}
Our goal is to construct an explanation $\{(S_i,e_i)\}$ for $f(x)$ that has a succinct and thus interpretable amount of components $S_i$,
and where the additive interpretation $\sum_i e_i$ approximates the behavior of $f$ best.
For example, if $f$ is a linear model, then the value function of a single feature $i$ is the weight $w_i$ times the deviation from the mean, i.e.~$v(i)=w_i(x_i - \E[X_i])$,
and the value function of a coalition of features $S$ is the sum of the individual value functions $v(S) = \sum_{i \in S} v(i)$.
For complex models such as neural networks however, there exists no exact analytic decomposition in practice.

Instead, we focus on finding a partition $\P$ of the features space $[d]$ such that each feature $i$ is contained in only one set $S_j$.
Each feature $i \in S_j$ is associated with only one explanatory value $e_j$, whilst still informing about interactions between features in $S_j$.
In particular, we want to find that partition $\P$ that $\min \left(f(x) - \sum_{S\in\P}v(S)\right)^2$, i.e.~the partition $\P$ that \textit{approximates} $f$ of $x$ best. 
The general idea is that if feature $x_i$ and $x_j$ interact and have a large joint effect on the result, the local surrogate model will make a large mistake if $x_i$ and $x_j$ are not in the same set $S$.

It is easy to see that the objective is trivially minimized by $\P = \{[d]\}$, 
which would not give any insight into the inner workings of the model.
Therefore, we regularize the complexity of the explanation
by penalizing the number of allowed interactions through the $L$0 norm.
For each set $S \in \P$, there are $|S||S-1|/2$ pairwise interactions, making the full $L$0 norm of a partition
$\mathcal{R}(\P) = \sum_{S \in \P} |S||S-1|/2$.
Thus, we define the optimal partition $\P^*$ in regards the value function $v$ of an algorithmic decision $f(x)$
as the partition $\P$ which minimizes
\begin{align}
	\P^* & = \arg \min_{\P} \left( f(x)- \sum_{S_i \in \P} v(S_i) \right)^2 \\
	& + \lambda \cdot \sum_{S_i \in \P} \frac{1}{2}|S_i||S_i-1| \quad .
	\label{eq:objective}
\end{align}
The explanation $\{(S_i,e_i)\}$ is then constructed from the sets $S_i \in \P^*$ of the optimal partition.

\subsection{Partitioning}
Objective \eqref{eq:objective} poses a challenging optimization problem.
Finding the best partition is a constrained variant of the subset sum problem and thus NP-hard (see Supplement \ref{ap:complexity}).
The number of partitions for a set of $d$ features is the Bell number $B_d$ that grows super-exponentially with $d$, ruling out exhaustive search.
Approximate solutions for subset sum problems can be computed in pseudo-polynomial time \citep{pisinger:99:knapsack},
they however require the value function to be computed for all elements of the power set, of which there are exponentially many.

Instead, our approach initially looks at all pairs of variables $x_i$ and $x_j$, and seeks to determine whether they have an interactive, non-additive effect on the prediction $f(x)$.
If there is no interaction, we can rule out the combination of $x_i$ and $x_j$ in the optimal partition, and thus prune the search space drastically.
\begin{restatable}{definition}{definitioninteraction}{
	\label{def:interaction}
	Given a value function $v$, the interaction $\interaction$ between $x_i$ and $x_j$ in the context of $x_S$ is defined as 
	\[
		\interaction(i,j,S) = v(S \cup i) + v(S \cup j) - v(S \cup \{i,j\}) - v(S)\;.
	\]
}
\end{restatable}
This definition of interaction, as introduced in \cite{lundberg:20:treeshap}, measures the effect of setting $X_i=x_i$ and $X_j=x_j$ individually, in contrast to the combined effect, whilst accounting 
for a covariate set $x_S$.
We now show, that if for any covariate set $S$, there is no interaction between $x_i$ and $x_j$, then $i$ and $j$ are not be grouped in the optimal partition
with regard to Objective \eqref{eq:objective}.
To this end, we begin by showing that the additivity of effects for a pair $x_i$ and $x_j$ is a sufficient criterion to rule out their pairing, 
and then show in which cases we can use the absence of interaction as an indicator for additivity.
\begin{restatable}{theorem}{theoremnonadditive}{
	\label{eq:noninteractive}
	Let $v$ be additive for the variables $x_i$ and $x_j$, so that for all covariates $S \subseteq [d]\setminus \{i,j\}$ there exists a partition $A \cup B = S$ with
	\[
		v(A\cup i) + v(B\cup j) = v(S\cup \{i,j\})\;.
	\]
	Then, $x_i$ and $x_j$ do not occur together in the optimal partition $\P^*$ in regards to Objective \eqref{eq:objective},
	i.e.
	\[
		\nexists S_k \in \P^*: i \in S_k \land j \in S_k\;.
	\]
}
\end{restatable}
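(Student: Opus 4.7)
The plan is to argue by contradiction via a local exchange. Suppose $\mathcal{P}^*$ contained a block $S_k$ with $\{i,j\} \subseteq S_k$; I would construct a strictly better partition $\mathcal{P}'$ by splitting $S_k$ into an $i$-part and a $j$-part, invoking the hypothesized additivity to ensure the approximation-error term is preserved while the regularizer strictly drops.

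Concretely, I would first instantiate the additivity assumption with the specific covariate set $S := S_k \setminus \{i,j\}$, which is a legal choice since $S \subseteq [d]\setminus\{i,j\}$. This yields a splitting $A \cup B = S_k \setminus \{i,j\}$ with $v(A\cup i) + v(B\cup j) = v(S_k)$. I would then define $\mathcal{P}'$ to coincide with $\mathcal{P}^*$ on every block other than $S_k$, replacing $S_k$ by the two new blocks $A \cup \{i\}$ and $B \cup \{j\}$. The additivity equation immediately gives $\sum_{T\in\mathcal{P}'} v(T) = \sum_{T\in\mathcal{P}^*} v(T)$, so the squared-error term of Objective~\eqref{eq:objective} is unchanged.

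The remaining step is to verify that the regularizer strictly decreases. Letting $a = |A|+1$ and $b = |B|+1$ (both at least $1$, since $i\notin A$ and $j\notin B$), the elementary identity $\binom{a}{2} + \binom{b}{2} = \binom{a+b}{2} - ab$ shows that the split saves exactly $\lambda \cdot ab \ge \lambda > 0$ in the regularizer, so $\mathcal{P}'$ strictly improves the objective, contradicting the optimality of $\mathcal{P}^*$.

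There is no deep obstacle in this proof; the argument is essentially a local swap made to work by the precise form of the additivity condition. The one subtlety worth flagging is that the hypothesis must quantify over \emph{all} covariate sets $S\subseteq[d]\setminus\{i,j\}$ — not, say, only $S=\emptyset$ — precisely so that we may apply it with $S = S_k \setminus \{i,j\}$ matching the block we wish to split. Strict positivity of $\lambda\cdot ab$ is what promotes the exchange from neutral to strictly improving, closing the contradiction.
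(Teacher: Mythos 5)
Your proof is correct and follows essentially the same local-exchange argument as the paper: split the offending block via the hypothesized additive decomposition, observe the reconstruction error is unchanged, and note the regularizer strictly decreases. Your version is in fact slightly more explicit than the paper's, pinning down the covariate set as $S_k\setminus\{i,j\}$ and quantifying the regularizer saving as exactly $\lambda\cdot ab$.
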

\begin{proof}
	Assume the optimal partition $\P^*$ contains a set $S$ where $i, j \in S$.
	Then, the value function $v(S)$ is decomposable into $v(S) = v(A \cup i)+v(B \cup j)$. 
	Thus, we may construct a partition $\P'$ with $A \cup i$ and $B \cup j$, where the reconstruction error
	$f(x) -\sum_{S_i \in \P'} v(S_i)$ remains the same and
	its regularization penalty shrinks, i.e.~$R(\P^*)>R(\P')$.
	It follows that the overall the objective of the partition $\P'$ is lower than $\P^*$, contradicting its optimality.
	\vspace{0.2cm}
\end{proof}
Theorem \ref{eq:noninteractive} confirms the intuition that if $v$ is additive for two variables $x_i$ and $x_j$, then they do not occur as part of the same set in the optimal partition.
The main challenge lies in the exponential quantity of contexts $S$ to consider, where the effect of $x_i$ and $x_j$ may differ, which is the same underlying problem
hampering the interpretability and computability of \nshap.
Instead, we show how to reduce this effort to only a single test per pair $x_i$ and $x_j$, which under some mild assumption allows to detect an absence 
of interaction and hence rule out their pairing from the optimal solution.
The resulting explanations are tractable in real time and more comprehensible due to less overall components.
\begin{restatable}{assumption}{additivityassumption}
	\label{as:additivity-assumption}
	If $v$ is additive for a partition $A, B$ of $S$, i.e.~$v(S) = v(A) + v(B)$,
	then it is also additive for all subsets $A' \subseteq A, B' \subseteq B$, so that
	\[
		\forall A' \subseteq A, B' \subseteq B: v(A') + v(B') = v(A' \cup B')\; .
	\]
\end{restatable}
Assumption \ref{as:additivity-assumption} requires that the additivity of two sets of features $A$ and $B$ is preserved for all of their subsets.
For example, if we find that $v(\{x_1,x_2,x_3\}) = v(\{x_1,x_2\}) + v(\{x_3\})$, then we also assume that $v(\{x_1,x_3\}) = v(\{x_1\}) + v(\{x_3\})$.
This holds for many popular value functions, including the interventional value function by \citet{janzing:20:causalshapley} and the original observational value 
function used by \citet{lundberg:17:shap} in conjunction with an underlying additive function $f$,
but does not generally hold for Asymmetric Shapley Values \citep{frye:20:asymmetricshapley} (see Supplement \ref{ap:additivity}).

\begin{restatable}{theorem}{theoremtest}
	\label{eq:theorem-test}
	If the expected interaction of a pair of variables $i$ and $j$ is not zero, i.e.
	\begin{gather}
		\label{eq:test}
		\E_S\left[\interaction(i,j,S)\right] \neq 0\;,
	\end{gather}
	then $v$ is not additive for $i$ and $j$, i.e.
	\begin{gather}
		\label{eq:nonadditive}
		\exists S \subseteq [d]\setminus \{i,j\}: \forall A, B, A \cup B = S, A\cap B = \emptyset:\\ 
		v(A \cup i) + v(B \cup j) \neq v(S \cup i, j)\;.
	\end{gather}
\end{restatable}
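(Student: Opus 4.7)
I would prove the statement by contrapositive: assume $v$ is additive for $i$ and $j$, i.e.\ the right-hand-side of \eqref{eq:nonadditive} fails, and show that this forces $\interaction(i,j,S)=0$ for every $S$, hence in expectation as well. The direction is natural because Assumption \ref{as:additivity-assumption} is precisely the bridge that turns the existential additivity (``there exists a partition $A\cup B = S$ that decomposes $v(S\cup\{i,j\})$'') into several simultaneous pointwise identities, which are exactly what is needed to evaluate $\interaction(i,j,S)$ term by term.

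\paragraph{Steps.} Fix an arbitrary $S\subseteq[d]\setminus\{i,j\}$. By the assumed additivity of $v$ for $i$ and $j$, there exist disjoint $A,B$ with $A\cup B=S$ such that $v(A\cup i) + v(B\cup j) = v(S\cup\{i,j\})$. Since $A\cup\{i\}$ and $B\cup\{j\}$ are disjoint and their union is $S\cup\{i,j\}$, this is exactly the additive-partition hypothesis of Assumption \ref{as:additivity-assumption} applied to $S\cup\{i,j\}$. Invoking the assumption, every subset-pair $A'\subseteq A\cup\{i\}$, $B'\subseteq B\cup\{j\}$ satisfies $v(A'\cup B') = v(A')+v(B')$. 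I would then instantiate this in three specific ways to obtain
\begin{align*}
v(S) &= v(A)+v(B),\\
v(S\cup i) &= v(A\cup i) + v(B),\\
v(S\cup j) &= v(A) + v(B\cup j),
\end{align*}
together with the original $v(S\cup\{i,j\})=v(A\cup i)+v(B\cup j)$. Substituting these four identities into Definition \ref{def:interaction} yields a telescoping cancellation, giving $\interaction(i,j,S)=0$.

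\paragraph{Conclusion.} Since $S$ was arbitrary, additivity of $v$ for $i,j$ implies $\interaction(i,j,S)=0$ for every $S$, and therefore $\E_S[\interaction(i,j,S)]=0$ for any distribution over $S$. Contrapositively, $\E_S[\interaction(i,j,S)]\neq 0$ forces the existence of at least one $S$ for which no disjoint partition $A\cup B = S$ realizes the additive decomposition, which is precisely \eqref{eq:nonadditive}.

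\paragraph{Main obstacle.} The calculation itself is a short algebraic identity, so the only real subtlety is bookkeeping: one must apply Assumption \ref{as:additivity-assumption} to the partition of $S\cup\{i,j\}$ (not of $S$) so that the singletons $\{i\}$ and $\{j\}$ sit on the two sides of the split, and then pick the right three subset-pairs to recover $v(S)$, $v(S\cup i)$, $v(S\cup j)$ in a form that shares common summands with $v(S\cup\{i,j\})$. Once this alignment is set up, the cancellation is automatic.
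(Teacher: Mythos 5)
Your proposal is correct and is essentially the paper's own argument, merely phrased as a contrapositive rather than a contradiction: both apply Assumption \ref{as:additivity-assumption} to the partition of $S\cup\{i,j\}$ to obtain the decompositions of $v(S)$, $v(S\cup i)$, and $v(S\cup j)$, and then observe the same telescoping cancellation in $\interaction(i,j,S)$. Your version is marginally cleaner in that it explicitly shows the interaction vanishes for \emph{every} $S$, but the key step and the bookkeeping are identical.
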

\begin{proof}
If there is interaction between $i$ and $j$, we show that there exists a covariate set $S$ for which $v$ is not additive for $i$ and $j$.
First, we note that 
\begin{gather}
	\E_S\left[\interaction(i,j,S)\right] \neq 0 \\
   \implies  \exists S \subseteq [d]\setminus \{i,j\}: \interaction(i,j,S) \neq 0 \;,
\end{gather}
i.e.~there exists a covariate set $S$ for which the interaction is not zero.
For this set $S$, it holds that 
\begin{gather}
	v(S \cup i) + v(S \cup j) \neq v(S \cup i,j) + v(S) \;. \label{eq:nonadditive-proof}
\end{gather}
If $v$ indeed was additive for $i$ and $j$, then for $S$ there exists a partition $A \cup B = S$ so that
\[
	v(S \cup i, j) = v(A \cup i) + v(B \cup j)\;.	
\]
By Assumption \ref{as:additivity-assumption}, we know that this decomposition also holds for $S$, $S \cup i$ and $S \cup j$, so that we can rewrite Equation \eqref{eq:nonadditive-proof} as
\begin{gather}
	v(A \cup i) + v(B) + v(A) + v(B \cup j) \\
	\neq v(A \cup i) + v(B \cup j) + v(A) + v(B)\;.
\end{gather}
This statement is a contradiction, and thus proves that $v$ is not additive for $i$ and $j$.
\end{proof}

With Theorem \ref{eq:theorem-test}, we show that we can reject the additivity of a pair of variables $i$ and $j$ if their interaction is non-zero.
As per Theorem \ref{eq:noninteractive}, any pair of variables $i$ and $j$ that is additive is not grouped together in the optimal partition.
Thus, to obtain the optimal partition $\Pi^*$, we need to only consider all interacting pairs of variables $i$ and $j$.

Considering these pairs alone however does not suffice. Let $i$ and $j$ be non-additive, and let $j$ and $k$ be non-additive too, then
we can show that $i$ and $k$ are also non-additive, i.e.~potentially grouped together in the optimal partition.
\begin{restatable}{theorem}{theoremmultiple}
	Let $v$ be non-additive for $i$ and $j$, i.e.
	\begin{gather}
	\exists S_1 \subseteq [d]\setminus\{i,j\}: \forall A, B, A \cup B = S_1, A\cap B = \emptyset:\\
	v(A \cup i) + v(B \cup j) \neq v(S_1 \cup i, j)
	\end{gather}
	and let $v$ be non-additive for $j$ and $k$, i.e.~$\exists S_2 \subseteq [d]\setminus\{j,k\}: \forall A,B: v(A \cup j) + v(B \cup k) \neq v(S_2 \cup j, k)$.
	Then $v$ is also not additive for the variables $i$ and $k$.
	\label{eq:theorem-multiple}
\end{restatable}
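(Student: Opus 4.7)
The plan is to prove the contrapositive: if $v$ is additive for $(i,k)$, then $v$ is additive for $(i,j)$ or for $(j,k)$, which contradicts at least one of the hypotheses. Assuming $i,j,k$ are distinct, I would first instantiate the additivity of $v$ for $(i,k)$ at the maximal covariate $S = [d] \setminus \{i,k\}$ to extract a partition $A^* \cup B^* = [d] \setminus \{i,k\}$ with $v(A^* \cup \{i\}) + v(B^* \cup \{k\}) = v([d])$. Writing $\tilde A = A^* \cup \{i\}$ and $\tilde B = B^* \cup \{k\}$, this reads $v(\tilde A) + v(\tilde B) = v([d])$ on the partition $(\tilde A, \tilde B)$ of $[d]$, so Assumption~\ref{as:additivity-assumption} lifts it to the uniform statement $v(A') + v(B') = v(A' \cup B')$ for every $A' \subseteq \tilde A$ and $B' \subseteq \tilde B$.

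Next, since $j \notin \{i,k\}$, it lies in exactly one of $A^*$ or $B^*$. If $j \in A^*$, I would verify additivity of $v$ for $(j,k)$ as follows: for an arbitrary $S \subseteq [d] \setminus \{j,k\}$, set $A = S \cap \tilde A$ and $B = S \cap \tilde B$. These are disjoint (because $\tilde A \cap \tilde B = \emptyset$) and cover $S$, and they satisfy $A \cup \{j\} \subseteq \tilde A$ (since $j \in A^* \subseteq \tilde A$) together with $B \cup \{k\} \subseteq \tilde B$. The lifted additivity then gives $v(A \cup \{j\}) + v(B \cup \{k\}) = v(S \cup \{j,k\})$, exactly the required witness. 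The case $j \in B^*$ is symmetric and produces additivity of $v$ for $(i,j)$. Either outcome contradicts one of the two non-additivity hypotheses, closing the proof.

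The key step — the only one that is not pure bookkeeping — is the use of Assumption~\ref{as:additivity-assumption} to promote the single identity $v([d]) = v(\tilde A) + v(\tilde B)$ to a family of identities across all compatible subsets. Pairwise additivity by itself only guarantees \emph{some} witnessing partition for each covariate $S$, and these witnesses may a priori differ from $S$ to $S$; without the uniformity granted by the assumption, there would be no fixed $(\tilde A, \tilde B)$ in which to locate $j$, and the case analysis above would collapse. I expect the write-up to be short, with the main care going into verifying disjointness and the subset inclusions in the constructed partition.
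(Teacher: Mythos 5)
Your proof is correct and takes essentially the same route as the paper's: argue by contraposition, instantiate the assumed additivity of $(i,k)$ at a covariate set containing $j$, case-split on which side of the witnessing partition $j$ falls, and use Assumption~\ref{as:additivity-assumption} to push the resulting identity down to the non-additivity witnesses $S_1$ or $S_2$. The only (cosmetic) difference is that you instantiate at the maximal covariate $[d]\setminus\{i,k\}$ and derive a uniform family of identities, whereas the paper instantiates at $(S_1\cup S_2\cup\{j\})\setminus\{i,k\}$; your choice slightly simplifies the bookkeeping.
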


We provide the full proof in the Supplement \ref{ap:proofs}.
Theorem \ref{eq:theorem-multiple} allows us to reject the additivity of a pair of variables $i$ and $j$ if they are connected by a path of interactions.
This helps us to reduce the search space so that the partition contains only those sets where all variables are interconnected.
In practice, we run the risk of falsely eliminating pairs of variables where their effect is inconclusive, i.e.~the interaction is zero, and there is 
no path between them.
Our evaluation however indicates that requiring significant interactions has a profusely positive effect by
preventing the fit spurious interactions, whilst allowing the majority of true interactions to go through.
\subsection{Explanation}
Once the optimal partition $\P^*$ is obtained, we use the discovered interacting sets $S_i \in \P^*$ as building blocks to explain the algorithmic decision $f(x)$.
We set the contribution $e_i$ of each feature set $S_i$ to the Shapley values of a new game $v'$, where the players are made up by the sets $S_i$ instead.
This new game $v'$ allows to only include either all or no features of a feature set $S_i$, and takes the values of the original value function $v$.
As a result, we return for each algorithmic decision $f(x)$ based on the optimal partition an additive explanation $\{(S_i,e_i)\}_{S_i \in \P^*}$, where $\sum_i e_i =f(x)$.
Finally, we quantify the amount of interaction in $S_i$ using Definition \ref{def:interaction} and extend it onto sets as the difference between their joint 
contribution and the contribution of grouped features individually $v(S_i)-\sum_{j\in S_i}v(j)$.

\section{Algorithm}
The number of possible partitions of a set of $d$ variables is the Bell number, $B_d$. 
The total amount grows super-exponentially with the number of variables $d$, making it vital to restrict the search space.
To this end, we introduce the \ourmethod algorithm for Interaction-Aware Shapley Value Explanations.
It enables us to find the optimal partition $\P^*$ which minimizes the regularized reconstruction error of Objective \ref{eq:objective}.

\subsection{Pairwise Interaction Test}
Before we can find the optimal partition, we first identify all pairs of variables that show significant interaction.
As per Theorem \ref{eq:theorem-test}, for a pair $x_i$ and $x_j$, we formulate the null-hypothesis of no interaction as
\[
    H_0: \E_S\left[\interaction(i,j,S)\right] = 0\; .    
\]
We use a $t$-test to find statistically significant interaction effects under a user specified significance level $\alpha$.
We construct an undirected graph with a node for each feature, and draw an edge wherever $H_0$ is rejected.
Figure \ref{fig:interaction-graph} shows an example of such an interaction graph.
Overall, the graph provides insight into the significant interactions which occur in the model and
supplements the additive \ourmethod explanation.
With it, we can now run a much accelerated graph partitioning algorithm to find the best partitioning.

By Theorem \ref{eq:theorem-multiple}, we know that any pair of nodes which is connected by a path in the interaction graph is not additive. 
Therefore, the optimal partition consists only of connected components of the interaction graph, and their subsets.
This because for any pair of variables $i$ and $j$ which are not connected by a path, and therefore not additive, 
a partition $\P$ which contains both $i$ and $j$ is suboptimal as per Theorem \ref{eq:noninteractive}.

\begin{figure}[t]
    \centering
    \includegraphics[width=0.8\linewidth]{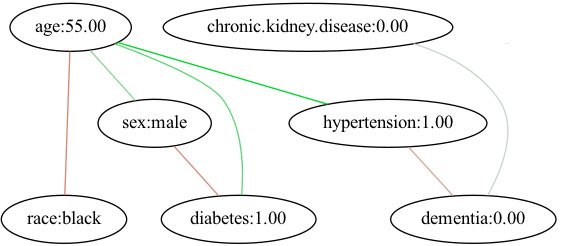}
    \caption{Interaction graph for predicted survival of a hospitalized COVID-19 patient.
    The detrimental effect of diabetes and hypertension on survival is alleviated by the relatively young age (55) of the patient.
    }
    \label{fig:interaction-graph}
\end{figure}

\subsection{Search}
Given the interaction graph we can derive all valid candidate partitions. A partition is valid if for each component $S$ and all pairwise features $x_i, x_j \in S$ there exist a path between $x_i$ and $x_j$ in the interaction graph. In the naive exhaustive approach we test all valid partitions, and select the one which minimizes our objective. 
We evaluate all eligible partitions $\P$ by sampling the value function $v(S)$ for each subset of features $S \in \P$,
computing the reconstruction error $f(x)- \sum_{S \in \P} v(S)$ and add the regularization penalty $\mathcal{R}(\P)$.
To avoid recomputing the value function for the same subset in different partitions, we additionally buffer the value function $v(S)$
for re-use in other partitions. \ourmethodE does an exact search over all eligible candidates and is guaranteed to find the optimal partition,
but comes to its limit for many variables, for which we introduce a greedy search variant.
\ourmethodG is a bottom up approach, which starts with the all singleton partition $\P_0 = \{ S_1=\{x_1\}, \dots, S_d=\{x_d\}\}$. 
Iteratively, the two sets $S_i$ and $S_j$ which yield the highest gain in the objective are merged.
Here, only merges are considered which do not violate the interaction graph, i.e. $S_i$ and $S_j$ are connected by an path. Naturally, the greedy approach is not guaranteed to be optimal, but as we will see in the evaluation, achieves near optimal results in practice. 
The pseudocode for both the greedy and exact variant and a complexity analysis can be found in the Supplement \ref{ap:algorithm}.

\section{Related Work}
We focus on post-hoc, model-agnostic explainability approaches \citep{ribeiro:16:modelagnostic} that treat the model $f$ as a black-box and generate explanations by perturbing the input and analyzing the output.
Here, explanations can be categorized into global explanations of $f$ and local explanations of a particular decision $f(x)$.

\citet{friedman:01:boosting} introduced the partial dependence plot (PD), a global explanation method that visualizes the relationship between a variable $X_i$ and the predicted output $f(X)$. 
PD plots are well suited for an injective relationship between $X_i$ and $f(X)$, however, unlike our method, are not well suited for cases where interaction effects between more than two variables occur.
Functional ANOVA (analysis of variance) \citep{hooker:04:anova,hooker:07:func-anova} is another global explanation approach which aims at discovering non-additive interactions between input variables.
\citet{sivill:23:shapley-sets} propose to discover interacting feature sets for a given model, one step further \citet{herbinger:23:decomposing} aim to partition the feature space into subspaces by minimizing feature interactions. 
Overall, global model explanations can give a good overview over a model $f$, but produce non-conclusive explanations when faced with complex, highly interactive functions where it is hard to create a single global summary.

Local explanations on the other hand aim to explain the decision of a model $f(x)$ for a particular instance $x$.
Local Interpretable Model Agnostic Explanations (\lime) \citep{ribeiro:16:lime}, is perhaps the most influential post-hoc explainability method.
\lime explains a prediction $f(x)$ by constructing a local surrogate model $f'$ that is interpretable, for example a Linear Regression or a Decision Tree. 
\lime is model agnostic and generates simple, intuitive explanations, but has a fuzzy data sampling process and no guarantees as it is based on purely heuristics. As \lime fits a local model, the resulting explanations can be misleading \citep{ribeiro:18:anchors} or non informative for cases where $x$ is \emph{far} from any decision boundary. 

\begin{figure*}[t]
    \begin{minipage}[t]{0.33\linewidth}
        \centering
        \begin{tikzpicture}
        \begin{axis}[
        pretty line,
        cycle list name = prcl-line,
        width = \textwidth,
        height = 3.3cm,
        xtick = {5,10,...,20},
        xtick = {10,25,50,75,100},
        ymax = 1,
        ymin = 0,
        xlabel = {Number of variables $d$},
        ylabel = {$\mathit{F1}$-Score},
        title = {Interacting Sets $S_i$},
        legend entries= {\ourmethodG, \ourmethodE, \nshap},
        legend columns = 1,
        legend style = {at={(1,1.0)},fill=white,draw=black,opacity=1,font=\scriptsize,nodes			= {scale=0.8},
        cells			= {scale=1}},
        thick
        ]
        \addplot table[x=Iteration, y=F1-GREEDY, col sep=comma] {expres/synthetic/ishap.csv}; %
        \addplot table[x=Iteration, y=F1-FULL, col sep=comma] {expres/synthetic/exhaustive.csv}; %
        \addplot table[x=Iteration, y=F1-NSHAP, col sep=comma] {expres/synthetic/exhaustive.csv};
        \end{axis}
        \end{tikzpicture}
     \end{minipage}
    \begin{minipage}[t]{0.33\linewidth}
        \centering
        \begin{tikzpicture}
        \begin{axis}[
        pretty line,
        cycle list name = prcl-line,
        width = \textwidth,
        height = 3.3cm,
        xtick = {10,25,50,75,100},
        ytick = {0,0.1,...,1},
        yticklabels = {0,,0.2,,0.4,,0.6,,0.8,,1},
        ymax = 1,
        ymin = 0,
        xlabel = {Number of variables $d$},
        ylabel = {$\mathit{F1}$-Score},
        title = {Pairwise Interactions},
        thick
        ]
        \addplot table[x=Iteration, y=Pairwise-F1-GREEDY, col sep=comma] {expres/synthetic/ishap.csv}; %
        \addplot table[x=Iteration, y=Pairwise-F1-FULL, col sep=comma] {expres/synthetic/exhaustive.csv}; %
        \addplot table[x=Iteration, y=Pairwise-F1-NSHAP, col sep=comma] {expres/synthetic/exhaustive.csv};
        \end{axis}
        \end{tikzpicture}
    \end{minipage}
   \begin{minipage}[t]{0.33\linewidth}
        \centering
        \begin{tikzpicture}
            \begin{groupplot}[
                pretty ybar small,
                cycle list name = prcl-ybar,
                pretty labelshift,
                ymin = 0.,
                ylabel = {$\mathit{F1}$-Score},
                title = {Data Settings},
                ymax = 1,
                group style = { group size=3 by 1, horizontal sep=15pt},
                xlabel = {},
                symbolic x coords={normal-mult,normal-sin,uniform-mult,uniform-sin},
                xtick = {normal-mult,normal-sin,uniform-mult,uniform-sin},
                xticklabel style={align=center},
                legend entries= {\ourmethodG, \ourmethodE, \nshap},
                legend columns = 3,
                legend style = {at={(1,1)},fill=white,draw=black},
                xticklabels = {$X \sim \mathcal{N}$\\mult.~$f_i$,$X \sim \mathcal{N}$\\sine $f_i$,$X \sim \mathcal{U}$\\mult.~$f_i$,$X \sim \mathcal{U}$\\sine $f_i$}, ]
                \nextgroupplot[width=\textwidth,height=3.4cm]
                \addplot[c1,fill=c1] table[x=Config, y=iSHAP-Greedy,col sep=comma] {expres/synthetic/per_config.csv};
                \addplot[c2,fill=c2] table[x=Config, y=iSHAP-Exact,col sep=comma] {expres/synthetic/per_config.csv};
                \addplot[dollarbill,fill=dollarbill] table[x=Config, y=NSHAP,col sep=comma] {expres/synthetic/per_config.csv};
            \end{groupplot}
        \end{tikzpicture}
   \end{minipage}
   \caption{[Higher is better] $\mathit{F1}$ scores of recovered interactions in GAMs. \ourmethod is more accurate than \nshap in detecting 
   full sets of interacting features (left) and pairwise interactions (middle), and can do so on more features.
   \ourmethodG is equivalent to \ourmethodE and outperforms \nshap across all data settings(right).}
   \label{fig:experiments-acc}
\end{figure*}

A different style of explanation are approaches which explain a decision $f(x)$ through combinations of features $x_i$ \citep{carter:19:sis, ribeiro:18:anchors}. In essence both method find a sufficient set of variables such that the prediction $f(x)$ does not change. In contrast to our method they can not provide explanations for regression models and they do not explain which combinations of variables has which effect on the prediction. 
Counterfactual explanations \citep{wachter:17:counterfactual, karimi:20:counterfactual,van:21:counterfactual,mothilal:20:counterfactual,poyiadzi:20:counterfactual} are another type of local explanations that bring together causality and explainability.
Algorithmic Recourse \citep{joshi:19:recourse, karimi:21:recourse} goes one step further and wants to find the best set of changes to reverse a models decision. 
These methods are most appropriate if the user seeks actionable insights, i.e.~how to reverse a decision, and, in contrast, do not explain a prediction. 

One of the most widely used approaches for explainable AI are Shapley values \citep{lundberg:17:shap}.
Shapley values where originally introduced \citep{shapley:53:shapley} in game theory to measure the contributions of individual players. 
Recently different variants have been proposed like asymmetric \cite {frye:20:asymmetricshapley} and causal Shapley values \citep{heskes:20:causalshap}. Unlike our method they provide per feature attributions and no further insight into which features interact. 
Describing interaction is getting more attention in recent years. \citet{jung:22:measuring} propose to quantify the effect of a group of causes 
through $\Do$-interventions, but focus on estimating those effects from non-experimental data.
\citet{jullum:21:groupshapley} propose to compute Shapley values on predefined feature sets, where they suggest to either group semantically related features or correlated features.
Unlike our approach the feature sets are not discovered and independent of the model. %

Most closely related is a recent class of so called \emph{interaction index} explanations.
The work by \citet{lundberg:20:treeshap} introduces \emph{SHAP interaction values} extending SHAP explanations to all pairwise interactions. 
\citet{sundararajan:20:shapleyinteraction} and \citet{tsai:23:faithshap} both derive Shapley interaction indices for binary features that cover the entire powerset.
Most recently \nshap was introduced \citep{bordt:23:shapleygam}, which extends Shapley interaction values to sets of degree $n$.
Similar to our method, \nshap explains a decision $f(x)$ through a generalized additive model $\sum_{S \subset [d]} \Phi_S^n$.
The main difference is that \nshap provides a value for the entire powerset of features, whereas 
our method chooses a succinct representation selecting interacting components. %

\section{Experiments}
In this section we empirically evaluate \ourmethod. We compare to \shap \citep{lundberg:17:shap} and \lime \citep{ribeiro:16:lime}, the most widely used additive explanation methods, and \nshap \citep{bordt:23:shapleygam}. We implemented \ourmethod in Python, and used the original Python implementation of \shap, \nshap and \lime. We provide the code and data generators in the Supplement. All experiments were conducted on a consumer-grade laptop.

\subsection{Discovering Interactions}
First, we examine whether \ourmethod recovers truly interacting sets of variables, and compare it against \nshap, which also 
informs about interactions.
To this end, we generate random generalized additive models $f$ (GAMs) for which we can define ground truth sets of interacting features $S_i$.
We sample $d$ feature variables $X_j$ from a uniform or a Gaussian distribution.
Then, we construct the ground truth partition $\P$
by repeatedly sampling sets $S_i$ of arbitrary size from a Poisson distribution.
Next, we define $f$ as
$f(x;\P) = \sum_{S_i \in \P} f_i(X_{S_i})\;,$
with a non-additive inner function $f_i$.
For the full details on the experimental setup we refer to Supplement~\ref{ap:experiments}.

Given a GAM $f(x;\P)$, we now compute for a random data point the \ourmethod and the \nshap explanation and evaluate the obtained feature partition $\hat{\P}$ (for \nshap, we take those sets with the highest interaction value).
We measure the quality of the predicted partition $\hat{\P}$ compared to the ground truth $\P$ using the
$\mathit{F1}$ score between the sets $S_i \in \P$ and $\hat{S}_i \in \hat{\P}$.
Additionally, we evaluate whether an explanation finds interactions on a pairwise level, where we compute the F1-score on paired variables $j,k \in S_i$ 
and $j', k' \in \hat{S}_i$.

We show the performance of \ourmethodG, \ourmethodE and \nshap in Fig.~\ref{fig:experiments-acc}.
For up to 10 variables, where \nshap terminates within the time limit of 10 min/explanation, 
\ourmethod is more accurate than \nshap in detecting full sets of interactions.
This advantage persists across all tested combinations of feature distributions $P(X)$ and classes of inner functions (Fig.~\ref{fig:experiments-acc} right).
Notably, \ourmethodG performs almost as good as \ourmethodE, showing the effectiveness of our interaction test in restricting the search space.
In Supplement \ref{ap:ablation}, we provide an ablation study showing how performance deteriorates without the interaction test.

In addition, we also evaluate the \emph{pairwise interaction} $\mathit{F1}$ score, as for an increasing amount of variables, the accuracy over sets deteriorates.
In particular, with ground truth $(\texttt{Age},\texttt{Sex},\texttt{Weight},\texttt{Height})$, finding $(\texttt{Age},\texttt{Sex},\texttt{Weight})$ gives no 
score on a set level, but contains half of the pairwise interactions.
We show the pairwise $\mathit{F1}$ score in the middle of Fig.~\ref{fig:experiments-acc}.
Here too, \ourmethod strongly outperforms \nshap and finds at least parts of the ground truth interacting sets, hence achieving a higher pairwise score.
Overall, \ourmethodG reliably uncovers most feature interactions and can do so given many variables.

\begin{figure*}[t]
	\begin{minipage}[t]{0.33\linewidth}
    	\centering
		\vspace{-3cm}
		\begin{tikzpicture}
			\begin{groupplot}[
				pretty ybar small,
				pretty discrete tilt,
                pretty labelshift,
				cycle list name = prcl-ybar,
				ymin = 0.,
				ylabel={$R^2$},
                width=\linewidth,
				title = {},
				ymax = 1,
				group style = { group size=4 by 1, horizontal sep=15pt},
				xlabel = {},
				symbolic x coords={Linear,SVM,RF,MLP,KNN},
                legend columns = 4,
				legend style = {at={(1,1.3)},fill=white,draw=black} ]
				\nextgroupplot[width=\linewidth,height=3.3cm]
				\addplot[c1,fill=c1] table[x=Model, y=ISHAP,col sep=comma] {expres/accuracy/model_classes.csv};
                \addplot[dollarbill,fill=dollarbill] table[x=Model, y=NSHAP,col sep=comma] {expres/accuracy/model_classes.csv};
				\addplot[apricot,fill=apricot] table[x=Model, y=SHAP,col sep=comma] {expres/accuracy/model_classes.csv};
                \addplot[b5,fill=b5] table[x=Model, y=LIME,col sep=comma] {expres/accuracy/model_classes.csv};
			\end{groupplot}
		\end{tikzpicture}
		\label{fig:experiments-surrogate}
	\end{minipage}
	\begin{minipage}[t]{0.33\linewidth}
		\centering
		\begin{tikzpicture}
			\begin{groupplot}[
				pretty ybar small,
				pretty discrete tilt,
                pretty labelshift,
				cycle list name = prcl-ybar,
				ymin = 0.,
				ylabel={$R^2$},
                width=\linewidth,
				title = {},
				ymax = 1,
				group style = { group size=4 by 1, horizontal sep=15pt},
				xlabel = {},
				symbolic x coords={california,diabetes,insurance,life,student,breast cancer,credit},
				legend entries= {\ourmethod, \nshap,\shap,\lime},
                legend columns = 4,
				legend style = {at={(1,1.3)},fill=white,draw=black} ]
				\nextgroupplot[width=\linewidth,height=3.3cm]
				\addplot[c1,fill=c1] table[x=Dataset, y=ISHAP,col sep=comma] {expres/accuracy/per_dataset.csv};
                \addplot[dollarbill,fill=dollarbill] table[x=Dataset, y=NSHAP,col sep=comma] {expres/accuracy/per_dataset.csv};
				\addplot[apricot,fill=apricot] table[x=Dataset, y=SHAP,col sep=comma] {expres/accuracy/per_dataset.csv};
                \addplot[b5,fill=b5] table[x=Dataset, y=LIME,col sep=comma] {expres/accuracy/per_dataset.csv};
			\end{groupplot}
		\end{tikzpicture}
		\label{fig:experiments-datasets}
	\end{minipage}
	\begin{minipage}[t]{0.33\linewidth}
	\centering
	\vspace{-3.2cm}
	\begin{tikzpicture}
		\begin{axis}[
			pretty line,
			clip,
			width = \linewidth,
			height = 3.3cm,
			ytick = {0,15,...,60},
			ymax = 60,
			ymin = -1,
			xlabel = {Number of variables $d$},
			ylabel = {Runtime (min)},
			legend entries= {\ourmethodG, \ourmethodE, \nshap, \shap, \lime},
			legend columns = 1,
			legend style = {at={(0.6,1.2)},fill=white,draw=black,opacity=1,font= {\scriptsize},
			nodes= {scale=0.8},cells			= {scale=1}},
			thick
			]
			\addplot+[c1, thick] table[x=nvariables, y=iSHAP-Greedy, col sep=comma] {expres/full_runtimes.csv}; %
			\addplot+[c2, thick] table[x=nvariables, y=iSHAP-Exact, col sep=comma] {expres/full_runtimes.csv}; %
			\addplot+[dollarbill, thick] table[x=nvariables, y=NSHAP, col sep=comma] {expres/full_runtimes.csv}; %
			\addplot+[apricot, thick] table[x=nvariables, y=SHAP, col sep=comma] {expres/full_runtimes.csv}; %
			\addplot+[b5,thick] table[x=nvariables, y=LIME,col sep=comma] {expres/full_runtimes.csv};
			\end{axis}
		\end{tikzpicture}
	\end{minipage}
	\vspace{-0.5cm}
	\caption{$R^2$ for surrogate models of \ourmethod, \nshap, \shap and \lime across different model classes (left) and datasets (middle). \ourmethod provides
	the most accurate surrogate model across all model classes and datasets, whilst scaling to more dimension than \nshap (right).}
	\label{fig:experiments-surrogate}
\end{figure*}

\subsection{Surrogate Model Accuracy}
Next, we evaluate the accuracy of \ourmethod, \shap, \nshap and \lime explanations as \textit{surrogate models} to explain different
classes of models.
Given a model $f$, this evaluation as described by \cite{poursabzi:21:evaluation} presents a user with a number of
explanations and then asks him to predict the models output on new, unseen data.
As all evaluated explanation methods have explicit, additive surrogate models,
we can directly compute the models output as implied by the resp.~explanation.
To this end, given two datapoints $x^{(1)}$ and $x^{(2)}$, we firstly construct a new datapoint $x'$ by taking features uniformly at random from $x^{(1)}$ or $x^{(2)}$.
Then, for each feature, depending on which datapoint it came from, we add its explained additive contribution to obtain the 
implied $\hat{f}(x')$ and compute the MSE to the true prediction $f(x')$ (full detail in Supplement \ref{ap:experiments}).

We consider five regression and two classification datasets\footnote{\textit{California}~\citep{pace:97:california}, \textit{Diabetes}~\citep{pedregosa:11:sklearn}, \textit{Insurance}~\citep{lantz:19:ml-r}, \textit{Life}~\citep{rajarshi:19:who}, \textit{Student}~\citep{cortez:08:student}, \textit{Breast Cancer}~\citep{Dua:19:uci} and \textit{Credit}~\citep{Dua:19:uci}} 
with increasingly complex, non-additive models $f$: linear models, support vector machines (SVM), random forests (RF), multi-layer perceptrons (MLP) and k-nearest neighbors (KNN).
We report the $R^2$ between the true model outcome and that predicted by the surrogate model in Fig.~\ref{fig:experiments-surrogate}.

For linear models $f$ the additivity assumption holds fully, which is reflected in the near perfect accuracy of all methods.
When using SVMs and random forests, the overall accuracy of all methods is decreased.
Here, \ourmethod emerges as the most accurate surrogate model with an $R^2$ over 0.9.
\lime struggles to model the local decision surface of an MLP as a linear model, and \nshap's exponential amount of modeled interactions of \nshap does not describe
k-nearest neighbors models well.
\shap does not take into account any interactions, leaving \ourmethod to consistently provide the most accurate surrogate model explanations.

From the performance across different datasets, shown in the middle of Fig.~\ref{fig:experiments-surrogate}, we see that \ourmethod outperforms \shap, \nshap and \lime on all datasets. 
In particular, we find that for the low-dimensional \textit{Diabetes} and \textit{Insurance} datasets, all methods perform well as surrogate models.
On datasets with more features and interactions between these, such as the \textit{Credit} dataset, we find that \ourmethod achieves a $R^2$ score of 0.9, 
while \shap, \nshap and \lime are only reach an $R^2$ score of 0.6-0.7.
On the \textit{Life Expectancy}, \textit{Student} and \textit{Breast Cancer} datasets where \shap and \lime struggle and \nshap times out, \ourmethod provides by far the best surrogate models.
This increase in performance comes with an increased computational effort for \ourmethod compared to the simpler \shap and \lime.
Still, in contrast to \nshap's limit of 16 features, \ourmethodE can explain up to 32 variables within an hour, and \ourmethodG scales up to hundreds of 
features whilst informing about the most significant interactions (right of Fig.~\ref{fig:experiments-surrogate}).

\subsection{Case Study: Covid-19 Patient}
\begin{table*}[!ht]
        \centering
        \subfloat[\shap values\label{}]{
            \renewcommand{\arraystretch}{1.2}
            \tiny
            \centering
            \fbox{
                \begin{tabular}{p{16ex}r}
                    Feature & Effect (\%)\\
                    \hline
                    age:55 & 28.1 \\
                    race:black & 0.9 \\
                    sex:male & -0.1 \\
                    hypertension:1 & 7.1 \\
                    hyperlipidemia:0 & -0.2 \\
                    diabetes:1 & -3.6 \\
                    CAD:0 & 7.3 \\
                    chf:0 & 0.0 \\
                    CeVD:0 & 0.0\vspace{6pt}\\
                \end{tabular}
            }
        }
        \subfloat[Top-k \nshap values (2516 total) \label{}]{
            \renewcommand{\arraystretch}{1.2}
            \tiny
            \centering
            \fbox{         
            \begin{tabular}{p{18ex}r}
                Feature Set & Effect (\%) \\
                \hline
                age & 15.1 \\
                \mbox{age, hypertension,} race, diabetes & -9.0 \\
                \mbox{age, hypertension} & 7.8 \\
                \mbox{age, race,} \mbox{hypertension} & 7.5 \\
                \mbox{age, race, diabetes} & 6.5 \\
                age, diabetes & 4.8 \\
                diabetes & -4.7 \\
                CAD & 4.7 \\
            \end{tabular}
            \begin{tabular}{p{18ex}r}
                
                Feature Set & Effect (\%) \\
                \hline
                hypertension, diabetes & 4.5 \\
                sex & -4.3 \\
                age, race, diabetes, CAD & 4.2 \\
                age,race & -4.1 \\
                age, hyperlipidemia, CAD & 3.6 \\
                $\vdots$ & $\vdots$ \vspace{1.2pt}\\
            \end{tabular}
            }
        }
        \subfloat[\ourmethod values \label{}]{
            \renewcommand{\arraystretch}{1.2}
            \tiny
            \centering
            \fbox{
            \begin{tabular}{p{16ex}r}
                Feature Set & \makecell[r]{Individual Effect+\\Interaction Effect(\%)}\\
                \hline
                \makecell[l]{diabetes:1, age:55,\\hypertension:1} & 10.2 + 18.6\\
                \makecell[l]{hyperlipidemia:0,\\CAD:0} & 4.5 + 1.9\\
                sex & -3.5 + 0\\
                race & 4.0 + 0\\
                copd & 0.2 + 0\\
                chf & 0.0 + 0\vspace{6.5pt}\\
            \end{tabular}
            }
        }
        \caption{Shapley value-based explanations for predicted Covid-19 survival odds. In (a) we show feature-wise \shap values, in (b) \nshap values for all feature subsets, and in (c) \ourmethod values, attributed to partitioned features. 
            }\label{tab:covid-table}
\end{table*}

Finally, we conduct a qualitative comparison between the Shapley value based explanations provided by \ourmethod, \shap and \nshap.
For this, we consider a Covid-19 dataset containing survival data of 1,540 hospitalized patients \citep{lambert:22:covid19}.
We train a random forest classifier to predict the likelihood of patient survival, based on diverse biomarkers such 
as age and pre-existing conditions.
For each patient we provide the respective \ourmethod, \shap and \nshap explanation in the following anonymized repository\footnote{\url{https://anonymous.4open.science/r/ICML-24-iSHAP-Case-Study-C439}}.
We show in Table \ref{tab:covid-table} the \shap (left), \nshap (middle) and \ourmethod (right) explanations for a patient which was hospitalized, and for which the model correctly predicted survival. We note that \ourmethodG and \shap explanations were computed within a second, whereas \nshap took 30 mins.

We discuss the explanations in turn. The features that \shap identifies as key to survival seem to be reasonable at first glance, but upon closer inspection are at least partly counterintuitive. \texttt{Hypertension:1} is marked as a positive factor and \texttt{Diabetes:1} as having only a slight negative effect, despite 
both are known risk factors for Covid-19 \citep{parveen:20:covid-diabetes}. A very strong positive effect is attributed to \texttt{Age:55} even though there many much younger patients in the dataset.

To obtain further insight, we move on to the \nshap explanation shown in the middle. We show the top 13 out of $2\,516$ interaction coefficients, 
the actual values of which we provide in Supplement~\ref{ap:nshap}. 
Like \shap, \nshap also identifies \texttt{Age} as a positive marginal factor, but additionally shows that it is included in many higher-order interacting sets,
interacting with \texttt{Hypertension}, \texttt{Diabetes} and \texttt{Race} amongst others in various combinations.
The amount of redundance and inconclusive values makes a clear interpretation of them hard, for example (\texttt{Age}, \texttt{Hypertension}, \texttt{Diabetes}, \texttt{Race}) is given a contribution of -9\% to survival chances,
while (\texttt{Age}, \texttt{Hypertension}) alone improve odds by 7.8\% supposedly.
The same phenomenon occurs across the board
and does not reveal which these interacting sets are, let alone which out of the $2\,516$ we should focus on.  

Thus, we inspect the \ourmethod explanation shown on the right. It partitions the feature set into six parts, and unlike the competing explanations, clearly identifies there are two interacting factors contributing to survival: the \texttt{Age:55} in conjunction with \texttt{Hypertension:1} and \texttt{Diabetes:1}, as well as the absence of \texttt{Hyperlipidemia} and \texttt{Coronary Artery Disease}. The positive interaction of age with diabetes and hypertension is explained as follows. While diabetes and hypertension are negative marginal factors for survival across the entire data set, their effect is significantly reduced for a patient of only 55 years old compared to the 
on-average much older patients in the dataset. 
This is reflected in the strong positive interaction found by \ourmethod, but also explains the high \shap value of the feature \texttt{Age:55} as it is due to the \emph{sum} of the marginal and the interaction effects. Second, we see that \texttt{Hyperlipidemia:0} and \texttt{Coronary Artery Disease:0} are positively interacting factors for survival. 
Coronary artery disease is known to be a risk factor for Covid-19 patients \citep{szarpak:22:covid-cad}, and thus its absence is a positive factor for survival. Hyperlipidemia is strongly associated with coronary artery disease \citep{goldstein:73:hyperlipidemia}, thus its absence validates the \texttt{Coronary Artery Disease:0} feature and interacts as a positive factor for survival.

Overall, we see that all three explanations describe the same phenomena, but do so in different levels of detail and succinctness. The \shap explanation is arguably the most compact, but also the least detailed as it cannot explain the interactions that are important to understand the underlying mechanisms. The \nshap explanation is arguably the most detailed, but, also the hardest to interpret, as it lists all interaction coefficients. The \ourmethod explanation offers the best of both worlds: it is as interpretable as \shap, and includes the main interactions as found by \nshap, so to succintly explain the models decision and making the user aware of the most important interactions.

\section{Conclusion}
In this paper, we proposed a model agnostic, post-hoc explanation method. 
In contrast to existing explanations, we directly integrate significant interactions between sets of features $x_{S_i}$ into a succinct, additive explanation.
We showed how to use a statistical test to guaranteed find the underlying optimal partitioning of features and avoid fitting spurious interactions.
Our algorithm \ourmethod is an effective and fast procedure that takes the theoretical results into practice. On synthetic data we have shown that \ourmethod 
returns accurate, ground truth interactions, and on real world data, we find that \ourmethod is in general a more 
accurate surrogate model than the state-of-the-art.

For future work, we plan to extend \ourmethod to allow multiple interactions per variable.
Furthermore, we want to extend the definition of interaction to be able to differentiate between any distribution
of interaction effects.

\section*{Impact Statement}
\iffalse
    This paper presents work whose goal is to advance the field of Machine Learning. There are many potential societal consequences of our work, none which we feel must be specifically highlighted here.

\else 

    In this paper we introduced \ourmethod, a model agnostic approach to explain black-box model predictions. 
    As our research focuses on providing understandable explanations, with the explicit goal of making the explanations interpretable our method can 
    potentially be used to discover undesired/unfair interactions that occur in a black-box model. 
    However, it is important to stress that our explanations do not make causal statements about the underlying data generating process,
    especially when the explained model is trained on sensitive data such as census.
    
\fi

\bibliographystyle{icml2024}
\bibliography{bib/abbreviations,bib/bib-jilles,bib/bib-paper}

\newpage
\appendix
\onecolumn
\section{Problem Complexity}
\label{ap:complexity}
The Subset Sum Problem is defined as follows:

\textbf{Input:} Given a set of integers $A = \{a_1, a_2, \ldots, a_n\}$ and a target integer $t$.

\textbf{Output:} Determine if there exists a subset $A' \subseteq A$ such that the sum of its elements is equal to $t$.

In our case, the set of integers $A$ is the set of all possible values of the value function $v(S)$ for $S \subseteq [d]$,
 i.e.~$A = \{v(\{1\}),v(\{1,2\}),v(\{1,2\}),\ldots, v([d])\}$.
The target integer $t$ s the prediction $f(x)-\mu$.
Note that we can transform any fixed precision floating point number into an integer by multiplying it with a sufficiently large constant.
We impose additional constraints on a valid solution, namely that the set $A'$ must be a partition of $[d]$.
We are interested in the closest subset sum variant, where instead the objective is to minimizes the difference between the sum of $A'$ and $t$,
i.e.~the prediction error $f(x)-\mu$.
Additionally, our objective contains a regularization term $\mathcal{R}(A')$ that penalizes the number of sets in the partition $A'$. 
\section{Proofs}
\label{ap:proofs}
\subsection{Proof of Theorem \ref{eq:noninteractive}}
\theoremnonadditive*
\begin{proof}
	Assume the optimal partition $\P^*$ contains a set $S$ where $i, j \in S$.
	Then, the value function $v(S)$ is decomposable into 
    \[v(S) = v(A \cup i)+v(B \cup j)\;.
    \] 
	Thus, we may construct a partition $\P'$ with $A \cup i$ and $B \cup j$.
    Let $\E[f(X)] = 0$ (achievable by pre-processing), then the reconstruction error of $\P'$ is 
    \[
        f(x) - \sum_{T' \in \P'} v(T') = \left(f(x) - \sum_{T \in \P^*} v(T) \right) - v(S) + v(A \cup i) + v(B \cup j)\;.
    \]
    As per the assumption, $v(S) = v(A \cup i)+v(B \cup j)$, so the reconstruction error of $\P'$ is equal to the reconstruction error of $\P^*$.
    For the regularization penalty of the new partition elements $A' = A \cup i$ and $B' = B \cup j$ in $\P'$ it holds that
    \[
        |A'|(|A'|-1)/2+|B'|(|B'|-1)/2 < (|A'|+|B'|)(|A'|+|B'|-1)/2\;,
    \]
    which shows that it is smaller than the regularization penalty of $\P^*$ for $S\cup i,j$.
    Thus, the overall objective of $\P'$ is lower than $\P^*$, contradicting its optimality.
	\lightning
\end{proof}
\additivityassumption*
\subsection{Proof of Theorem \ref{eq:theorem-test}}

\begin{proof}
    If there is interaction between $i$ and $j$, we show that there exists a covariate set $S$ for which $v$ is not additive for $i$ and $j$.
    First, we note that 
    \begin{gather}
        \E_S\left[\interaction(i,j,S)\right] \neq 0 \\
       \implies  \exists S \subseteq [d]\setminus \{i,j\}: \interaction(i,j,S) \neq 0 \;,
    \end{gather}
    i.e.~there exists a covariate set $S$ for which the interaction is not zero.
    For this set $S$, it holds that 
    \begin{gather}
        v(S \cup i) + v(S \cup j) \neq v(S \cup i,j) + v(S) \;. \label{eq:nonadditive-proof-apx}
    \end{gather}
    If $v$ indeed was additive for $i$ and $j$, then for $S$ there exists a partition $A \cup B = S$ so that
    \[
        v(S \cup i, j) = v(A \cup i) + v(B \cup j)\;.	
    \]
    By Assumption \ref{as:additivity-assumption}, we know that this decomposition also holds for $S$, $S \cup i$ and $S \cup j$, so that we can rewrite Equation \eqref{eq:nonadditive-proof-apx} as
    \begin{gather}
        v(A \cup i) + v(B) + v(A) + v(B \cup j) \\
        \neq v(A \cup i) + v(B \cup j) + v(A) + v(B)\;.
    \end{gather}
    This statement is a contradiction, and thus proves that $v$ is not additive for $i$ and $j$.
    \end{proof}
\subsection{Proof of Theorem \ref{eq:theorem-multiple}}
\theoremmultiple*
\begin{proof}
	Assume that $v$ is additive for $i$ and $k$, i.e.
	$\forall S_3 \subseteq [d]\setminus\{i,k\}: \exists A, B: v(A \cup i) + v(B \cup k) = v(S_3 \cup i, k).$
	
	Now consider a set $S_1$ for which $v$ is not additive in regards to $i$ and $j$, i.e.~$\forall A_1, B_1: v(A_1\cup i)+v(B_1 \cup j) \neq v(S_1 \cup i,j)$ and the set $S_2$ for $j$ and $k$ respectively.
	We construct a set $S_3 = (S_1 \cup S_2 \cup \{j\})\setminus\{i, k\}$, for which $v$ now has to be additive with regard to $i$ and $k$, i.e.~there exists a partition $A_3$, $B_3$ so that 
	$v(A_3 \cup i) + v(B_3 \cup k) = v(S_3 \cup i, k)$.
	
	There are two cases to consider, either $j \in A_3$ or $j \in B_3$.
	Let $j \in B_3$, then we can construct a new sub-partition $A_1 = S_1 \cap A_3$ and $B_1 = S_1 \cap (B_3\cup k)$.
	$A_1$ and $B_1$ are subsets of $A_3\cup i$ and $B_3\cup k$, so that by Assumption 1 additivity is preserved for $A_1$ and $B_1$. Therefore, it holds that 
	$v(A_1 \cup i) + v((B_1 \setminus j) \cup j) = v(S_1 \cup i, j)$, since $A_1\cup B_1 = S_1$ as $S_1 \subseteq S_3 \cup k$.
	This contradicts the assumption that $v$ is non-additive for $i$ and $j$.
	
	Similarly, we can show that $j \in A_3$ violates the assumption that $v$ is non-additive for $j$ and $k$, and conclude that $v$ must in fact be non-additive for $i$ and $k$.
	In the interaction graph, this allows us to reject the additivity of a pair of variables $i$ and $j$ if they are connected by a path, 
	and justifies the use of connected components over cliques as search space.
\end{proof}

\section{Additivity of Value Functions}
\label{ap:additivity}
We consider two value functions: the observational value function
$$v(S;f,x) = \E\left[f(X)|X_S=x_S\right]$$ by \cite{lundberg:17:shap}, where it is assumed that all variables are independent of each other, i.e.~$\forall i \neq j: X_i \ind X_j$,
and the interventional value function $$v(S;f,x) = \E\left[f(X')|\Do(X_S'=x_S)\right]$$ by \cite{janzing:20:causalshapley},
where we consider as features variables the model inputs $X_i'$ that are purely determined by the real world counterpart $X_i$.
Hence, intervening on the model input as $\Do(X_S'=x_S)$ only has an effect on the input $X_S'$.

Let $f$ now be additive for two sets $A$ and $B$, so that $f(x)= g(x_A)+h(x_B)$, then the value function $v(A;f,x) + v(B;f,x)$ is transformed into
\begin{align}
    & \E\left[f(X)|X_A=x_A\right] + \E\left[f(X)|X_B=x_B\right] \\
    = & \E\left[g(X)| X_A=x_A\right] + \E\left[h(X)| X_A=x_A\right] + \E\left[g(X)|X_B=x_B\right] + \E\left[h(X)|X_B=x_B\right]\;.
\end{align}
We can drop the conditioning of $X_A=x_A$ where there is only $h$ and vice versa for $X_B=x_B$ and $g$.
This is possible both with the independence assumption in the observational Shapley values by \cite{lundberg:17:shap}, and the causal model 
as postulated by \cite{janzing:20:causalshapley}. This leaves us with
\begin{align}
    \E\left[g(X)| X_A=x_A\right] + \E\left[h(X)\right] + \E\left[g(X)\right] + \E\left[h(X)|X_B=x_B\right] = \E\left[g(X)| X_A=x_A\right] + \E\left[h(X)|X_B=x_B\right] + \mu\;.
\end{align}
By convention, we preprocess $f(X)$ so that $\mu = \E[f(X)] = 0$, and $\mu$ hence can be dropped.
Now, we similarly decompose $v(S;f,x)$ as 
\[
    \E[f(X)|X_S=x_S] = \E[g(X)+h(X)|X_S=x_S] = \E[g(X)|X_S=x_S] + \E[h(X)|X_S=x_S] \;.
\]
Now we again drop $X_B$ from $g$ and $X_A$ from $h$ and are left with
\[
    \E[g(X)|X_A=x_A] + \E[h(X)|X_B=x_B]\;,
\]
which shows the additivity for any set $A$ and $B$ for which $f$ is decomposable.
\section{Algorithm}
\label{ap:algorithm}
\ourmethod consists of two main subroutines: find\_interactions and find\_partition.
The first subroutine is the same for both the greedy and the optimal algorithm.
\ourmethodG uses a greedy, bottom-up approach to find the best partition, 
while \ourmethodE uses a exhaustive search.

\paragraph{find\_interactions.}
As input, find\_interactions receives a data point $x$, a model $f$ and a sample $\hat{X}$ of $P(X)$.
It returns all pairwise interactions between features that are statistically significant, encoded 
as a graph $G$.
We initialize the interaction graph $G$ with $d$ nodes, where each node represents a single feature.
We then sample $n_s$ new data points $x^{(j)'}$ from the empiric data distribution, either marginally, conditionally or interventional as 
required by the value function $v$.
For each data point $x^{(j)'}$, we sample a random intervention $z \in \{0,1\}^d$ with $p=0.5$.
We then intervene on the $i$-th feature on the $j$-th data point $x^{(j)'}$, i.e.~$x^{(j)'}_i=x_i$, if $z^{(j)}_i=1$.

Now, for each pair of features $i,j$, we test the hypothesis
\[
    H_0: \E_S\left[v(S \cup i)-v(S)+v(S \cup j)-v(S)\right] = \E_S\left[v(S \cup i,j)-v(S)\right]
\]
by taking dividing up the sample $\{f(x^{(j)'})\}$ into four subsets according to the intervention $z^{(j)}$:
\begin{itemize}
    \item $v(S \cup \{i,j\}) \gets \{f(x^{(j)'}) | i,j \in z^{(j)} \}$
    \item $v(S \cup \{i\})\gets\{f(x^{(j)'}) | i \in z^{(j)}, j \notin z^{(j)} \}$
    \item $v(S \cup \{j\})\gets\{f(x^{(j)'}) | j \in z^{(j)}, i \notin z^{(j)} \}$
    \item $v(S)\gets\{f(x^{(j)'}) | i,j \notin z^{(j)} \}$
\end{itemize}
Now, we can test the hypothesis using a two-sided t-test with significance level $\alpha$ with unequal variances,
also known as Welch's t-test.
If the hypothesis is rejected, we add an edge between the nodes $i$ and $j$ to the graph $G$.
By splitting up the sample into four subsets, we can test the hypothesis for each pair of features $i,j$ 
on the same sample, which is more efficient than testing each pair on a separate sample as 
it reduces the number of required samples and thus evaluations of $f$ by the amount of pairwise interactions, i.e.~$O(d^2)$.

\subsection{Complexity}
The complexity of the greedy search is cubic. For a fully connected graph we have to evaluate $d(d-1)/2$ merges, where we can take at most $d$ steps
before arriving at the complete set $[d]$. In each step, we have to estimate the value function $v$ with $n$ samples, 
whose complexity we denote by $O(v(n,d))$. Hence, its complexity is $O(d^3)O(v(n,d))$. 

\begin{algorithm2e}[t]
	\caption{find\_interactions($f,x,\hat{X},n_S,\alpha$)}
    \label{alg:interactions}
    \KwIn{Data point $x$, model $f$, sample $\hat{X}$, number of samples $n_S$, significance level $\alpha$}
    Initialize graph $G$ with $d$ nodes.\\
    Sample $n_s$ interventions $z^{(k)} \in \{0,1\}^d$ with $p=0.5$.\\
    Sample $n_s$ new points $x^{(k)'}$ from the marginal/conditional/interventional distribution using $\hat{X}$.\\
    \For{$i=1:n_s$, $j=1:d$} {
        \If{$z^{(i)}_j=1$}{
            Set the $i$-th feature on the $j$-th data point $x^{(j)'}_i$ to $x_i$
        }
    }
    \For{$i=1:d$, $j=i+1:d$} {
        Sample of $v(S \cup \{i,j\}) \gets \{f(x^{(k)'}) | i,j \in z^{(k)} \}$\\
        Sample of $v(S \cup \{i\})\gets\{f(x^{(k)'}) | i \in z^{(k)}, j \notin z^{(k)} \}$\\
        Sample of $v(S \cup \{j\})\gets\{f(x^{(k)'}) | i \notin z^{(k)}, j \in z^{(k)} \}$\\
        Sample of $v(S)\gets\{f(x^{(k)'}) | i,j \notin z^{(k)} \}$\\
        p = Welch's t-test for $v(S \cup \{i\}) + v(S \cup \{j\}) = v(S \cup \{i,j\}) + v(S)$\\
        \If{$p < \alpha$}{
            Add edge $(i,j)$ to $G$
        }
    }
    \Return $G$
\end{algorithm2e}

\paragraph{find\_partition.}
The find\_partition subroutine takes in addition the graph $G$ from find\_interactions
and uses the same data point $x$, the model $f$ and the sample $\hat{X}$ of $P(X)$. 
find\_partition returns the best scored partition $\P$ in regards to Objective \ref{eq:objective}.

For the greedy approach, we initialize $\P$ with $d$ singleton sets, i.e.~$\P = \{S_i|S_i=\{i\}\}_{i=1}^d$.
We merge all eligible pairs of sets $S_i,S_j \in \P$ into a new set $S_i \cup S_j$ and score the new partition $\P'$.
Eligibility is given if the graph $G$ contains an edge between an element of $S_i$ and an element of $S_j$.
Each step, we merge the pair of sets that yields the best score and terminate 
once no more improvement is possible.

The exhaustive approach is a brute-force search over all possible partitions $\P$.
We restrict the search space by only considering partitions $\P$, were all elements $S_i \in \P$ are connected in the graph $G$.
Then, we score each partition $\P$ and return the best scored partition.

\begin{algorithm2e}[]
    \caption{score\_partition($\P,v,\lambda$,f(x))}
    \label{alg:score_partition}
    Pred $\gets 0$\\
    \For{$S_i \in \P$} {
        Pred $\gets$ Pred + $v(S_i)$
    }
    Error $\gets$ Pred $- f(x)$\\
    Reg $\gets \lambda \cdot \sum_{S_i \in \P} |S_i|$\\
    \Return Error + Reg
\end{algorithm2e}

\begin{algorithm2e}
	\caption{find\_partition\_greedy($G,v,f(x),\lambda$)}
    \label{alg:partition-greedy}
    \KwIn{Graph $G$, value function $v$, prediction $f(x)$, regularization parameter $\lambda$}
    d $\gets$ number of nodes in $G$\\
    Initialize $\P = \{S_i|S_i=\{i\}\}_{i=1}^d$.\\
    \While{True}{
        Best Score $\gets$ score\_partition($\P,v,\lambda$,f(x))\\
        Candidate Found $\gets$ False\\
        \For{$S_i,S_j \in \P$} {
            \If{$\exists k \in S_i, l \in S_j: (k,l) \in G$}{
                Candidate $\P' \gets \P$\\
                $\P' \gets \P' \cup \{S_i \cup S_j\} \setminus \{S_i,S_j\}$\\
                Candidate Score = score\_partition($\P',v,\lambda$,f(x))\\
                \If{Candidate Score $<$ Best Score}{
                    Best Score $\gets$ Candidate Score\\
                    $\P \gets \P'$\\
                    Candidate Found $\gets$ True\\
                }
            }
        }
        \If{Candidate Found = False}{
            Break
        }
    }
    \Return $\P$
\end{algorithm2e}

\begin{algorithm2e}[]
	\caption{find\_partition\_exhaustive($G,v,f(x),\lambda$)}
    \label{alg:partition-exhaustive}
    \KwIn{Graph $G$, value function $v$, prediction $f(x)$, regularization parameter $\lambda$}
    d $\gets$ number of nodes in $G$\\
    Best Score $\gets \infty$\\
    \For{$\P$ in all partitions of $[d]$}{
        \If{$\exists S_i,S_j \in \P: \nexists k \in S_i, l \in S_j: (k,l) \in G$}{
           \Continue
        }
        Score $\gets$ score\_partition($\P,v,\lambda,f(x)$)\\
        \If{Score $<$ Best Score}{
            Best Score $\gets$ Score\\
            Best $\P \gets \P$
        }
    }
    \Return Best $\P$
\end{algorithm2e}

\begin{algorithm2e}
	\caption{$\ourmethod(f,x,\hat{X},n_S,v,\alpha,$greedy)}
    \KwIn{Data point $x$, model $f$, sample $\hat{X}$, number of samples $n_S$, value function $v$, significance level $\alpha$}
	\label{alg:ishap}
        Interaction Graph $G \gets$ find\_interactions($f,x,\hat{X},n_S,\alpha$)\\
        \If{greedy}{
            Partition $\P \gets$ find\_partition\_greedy($G,v,f(x),\lambda$)\\
        }
        \Else{
            Partition $\P \gets$ find\_partition\_exhaustive($G,v,f(x),\lambda$)\\
        }
        Value function $v'(S)$ of game with $\P$ as players, $v'(S) = v(S)$ if $S \in \mathcal{P}(\P)$\\
        Compute Shapley values $\phi_i$ for $i=1,\dots,m$ using $S_i \in \P$ as players using $v'$\\
        \Return Explanation $\{(S_i,\phi_i)\}_{S_i \in \P}$, Interaction Graph G
        
\end{algorithm2e}

\paragraph{Explanation.}
Once we have found the best scored partition $\P$, we can explain the prediction $f(x)$ by
computing the Shapley values for a new game $v'$ which has as its players the elements of $\P$ instead
of the features $\{1,\ldots,d\}$.
The value function $v'$ is defined only for those sets $S$ which are elements of the power set of $\P$.
On that set, the value function $v'$ is defined as $v'(S) = v(S)$ as the underlying model $f$ is the same.

On this new game, we can compute the Shapley values $\phi_i$ for each element $S_i \in \P$.
The Shapley value $\phi_i$ is the average marginal contribution of the element $S_i$ to the value of the game $v'$.
\ourmethod returns the Shapley values $\phi_i$ as the explanation for the prediction $f(x)$, 
in addition to the interaction graph $G$ and the partition $\P$.

\section{Experiments}
\label{ap:experiments}
\subsection{Interaction Experiments}
We first verify whether \ourmethod accurately recovers the correct sets of variables for a generalized additive model. 
To this end, we generate a random function $f$ over $d \in \{4,6,8,10,15,20,30,50,75,100\}$ variables. 
To obtain a function with additive components, we partition the variables into sets $S_i$, where we iteratively sample the size of each set from 
a Poisson distribution with $\lambda = 1.5$, 
over which we define function $f(X) = \sum_{S_i \in \P} f_i(X_{S_i})$, whereas inner functions $f_i$ we consider
\begin{itemize}
    \setlength\itemsep{0em}
     \item $f_i\left(X_{S_i}\right)= \prod_{j\in S_i} a_{i,j}\cdot X_j$, $a_{i,j}\in \pm[0.5,1.5]$
     \item $f_i\left(X_{S_i}\right)= \sin\left(\sum_{j\in S_i} a_{i,j}\cdot X_j\right)$, $a_{i,j}\in \pm[0.5,1.5]$
 \end{itemize}
We sample all the underlying $d$ variables either from a normal distribution: $P(X_j)=N(\mu,\sigma^2), \mu \in [0,3], \sigma \in [0.5,1.5]$ or a uniform distribution: $X_j \in \mathcal{U}(0,3)$. 
In total, for we test the accuracy of 100 explanations for each combination of $d$, inner function and sampling distribution.
Each time, we sample a random function $f$ and dataset $X$ of 10$\,$000 points. 
We use the observational value function $v(S) = E[f(X)|X_S=x_S]$,
where we sample $X_i$ individually as they are independently generated.
For a random $x \in X$, we generate the partition $\hat{P}$ with \ourmethod, 
using as significance level $\alpha=0.01$ and as regularization coefficient
$\lambda = 5\mathrm{e}{-3}$. 

\paragraph{Ground Truth Feature Importance}
\begin{figure}
    \begin{tikzpicture}
        \def\n{2} %
       \begin{axis}[
           my boxplot,
           ymax = 1,
           ymin = 0,
           xmin = 0, 
           xmax = 3,
           xtick={0,1,2,3,4,5,6,7,8,9,10},
           xticklabels={,\ourmethod, \shap},
           width = 5cm,
           height = 5cm,
           ylabel = {Average Correlation},
       ]
       \addplot+[my box]
       table [
               y=ishap,
               col sep=comma,
           ]
           {expres/avrage_correlation_grad.csv};
           \addplot+[my box]
           table [
                   y=shap,
                   col sep=comma,
               ]
               {expres/avrage_correlation_grad.csv};
       \end{axis}
    \end{tikzpicture}
    \begin{tikzpicture}
        \begin{axis}[
            ymin = 0,
            ymax = 1.1,
            width = 5cm,
            height = 5cm,
            pretty ybar small,
            xlabel near ticks,
            x label style 		= { font=\scriptsize },
            x tick label style 	= { font=\scriptsize,
                rotate=50, anchor=east, align=right },
            bar width       = 0.2cm,
            cycle list name = prcl-ybar,
            ylabel          = {Pearson Correlation}, xlabel = {}, 
            legend entries  = {},
            xticklabel style={
                xshift=-3pt, %
            },
            xticklabels = {$\{x_{13}\}$,$\{x_{18}\}$,$\{x_{1}\}$,$\{x_{14}\, x_{6} \, x_{15} \,x_{17}\}$,$\{x_{7} \, x_{12} \,x_{9}\,x_{2}\}$,$\{x_{19} \,x_{0}\}$,$\{x_{3}\}$,$\{x_{10} \,x_{5} \, x_{8}\}$,$\{x_{4}\}$,$\{x_{11} \, x_{16}\}$},
            ytick={0,0.2,0.4,0.6,0.8,1},
            visualization depends on=y \as \y,
             ]
            \addplot table[ y=correlation, col sep=comma,] {expres/var_set_correlation_example.csv};
        \end{axis}
    \end{tikzpicture}
    \begin{tikzpicture}
        \begin{axis}[
            ymin = 0,
            ymax = 1.1,
            height = 5cm,
            width = 8cm,
            pretty ybar small,
            xlabel near ticks,
            x label style 		= { font=\scriptsize },
            x tick label style 	= { font=\scriptsize,
                rotate=50, anchor=east, align=right },
            bar width       = 0.2cm,
            cycle list name = prcl-ybar,
            ylabel          = {Pearson Correlation}, xlabel = {}, 
            legend entries  = {},
            xticklabels  = {$x_{0}$,$x_{1}$,$x_{2}$,$x_{3}$,$x_{4}$,$x_{5}$,$x_{6}$,$x_{7}$,$x_{8}$,$x_{9}$,$x_{10}$,$x_{11}$,$x_{12}$,$x_{13}$,$x_{14}$,$x_{15}$,$x_{16}$,$x_{17}$,$x_{18}$,$x_{19}$},
            ytick={0,0.2,0.4,0.6,0.8,1},
            visualization depends on=y \as \y,
            ]
            \addplot table[ y=correlation, col sep=comma,] {expres/var_correlation_example.csv};
        \end{axis}
    \end{tikzpicture}
    \caption{Comparison between ground truth feature importance (gradient times feature values) and the respective \ourmethod values and \shap values. On the left, we show the average correlation over all models. In the middle, we show the correlation between the \ourmethod values and the ground truth for a specific model, on the right, we show the correlation between the ground truth and the \shap values.}
    \label{fig:feature-importance}
\end{figure}
Next, we evaluate whether the reported \ourmethod values align with the ground truth feature importance, to this end, we first need to define the ground truth feature importance for sets of variables. For purely linear functions the importance of a single feature is $v(i, x', f)=w_ix'_i - w_i\E[X_i]$, where $w_i$ is the weight of feature $i$ in $f$. We use the same concept for general additive models, where we use $f_i\left(X_{S_i}\right)= \prod_{j\in S_i} a_{i,j}\cdot X_j$, $a_{i,j}\in \pm[0.5,1.5]$ as inner functions, hence the importance of a feature set $S_i$ for a given sample is, 
\[
    v(S_i,x',f) = \frac{\partial^{|S|} f}{\prod_{j \in S_i}{\partial x_j}} (x'_1, \dots, x'_d) * \prod_{j \in S_i}x'_j \quad .
\]
Since the expected value is constant offset, we omit it. 
We compute person correlation between the \ourmethod explanation and the ground truth $v(S_i,x',f)$.

We consider the same setup as described above, with $d=20$, we generate 100 random models, for each model we uniform at random select 300 datapoints and generate the respective explanations. In Figure \ref{fig:feature-importance} (left) we show the average of the average correlation over all models. For comparison, we conduct the analog experiment, over the same set of models and datapoints, with \shap values, computing the correlation between \shap values and the respective partial derivative times the feature value.
We observe a near-perfect correlation between ground truth and \ourmethod values, while the \shap values are less correlated with their respective gradients. 

In Figure \ref{fig:feature-importance} (middle) we show the correlation between the individual additive components, for a specific model, and on the right the correlation of \shap values with the respective gradient times feature values. 
While \ourmethod gets near perfect correlation, \shap gets a much lower correlation on average and only gets near near-perfect correlation for additive components that include only a single variable. 

With this experiment, we show that \ourmethod gives better insight into general additive models and models that can (locally) be approximated with GAM's. 

\subsection{Accuracy Experiments}
We consider five regression and two classification dataset:\textit{California}~\citep{pace:97:california}, \textit{Diabetes}~\citep{pedregosa:11:sklearn}, \textit{Insurance}~\citep{lantz:19:ml-r}, \textit{Life}~\citep{rajarshi:19:who}, \textit{Student}~\citep{cortez:08:student}
with increasingly complex, non-additive models $f$: linear models, support vector machines (SVM), random forests (RF), multi-layer perceptrons (MLP) and k-nearest neighbors (KNN).
This experiment is for each dataset repeated 100 times and goes as follows: we pick two instances $x^{(1)}$ and $x^{(2)}$ from a dataset. We use these to construct a new data point $x'$ by uniformly at random choosing the value for each variable $X_i$ either as defined by $x^{(1)}$ or by $x^{(2)}$. Let $I_1$ be the indices of all feature values chosen from $x^{(1)}$, and $I_2$ from $x^{(2)}$. We run the respective explanation method for both $x^{(1)}$ and $x^{(2)}$. We can now use these additive explanations to obtain the prediction of the surrogate model for $x'$ defined as 
By design, this scheme is not applicable for all random instances with partitions $\P^{(k)}$ and $\P^{(l)}$, in which case simply resample until we have an admissible input.
We compute the implied prediction for the new data point $x'$ in the following way for each method:
\begin{itemize}
    \setlength\itemsep{0em}
    \item \ourmethod: $f(x') = \sum_{S_i \in \P_1, S_i \subset I_1} e_{i}^{(1)} + \sum_{S_j \in \P_2, S_j \subseteq I_2} e_{j}^{(2)}$
    \item \shap: $f(x') = \sum_{i \in I_1} \phi_i^{(1)} + \sum_{j \in I_2} \phi_j^{(2)}$
    \item \nshap: $f(x') = \sum_{S \subseteq I_1} \Phi_{S}^{(1)} + \sum_{S \subseteq I_2} \Phi_{S}^{(2)}$
    \item \lime: For each datapoint we obtain a surrogate model $\hat{f}_1$ and $\hat{f}_2$. We take $\hat{f}_1(x')$ and $\hat{f}_2(x')$ and 
    use whichever is closer to $f(x')$.
\end{itemize}

\section{Ablation Study}
\label{ap:ablation}

We seek to determine how \ourmethod between the interaction test, or whether a simple, greedy merging algorithm alone already suffices.
To this end, we perform an ablation study of \ourmethod with and without interaction testing on the same synthetic data, both for the \ourmethodG and \ourmethodE search.
We use the same parameters as in the previous section.
When ablation the interaction test, we simply skip the interaction test and use a fully connected interaction graph $G$ instead.

We observe an average reduction of \textbf{53\%} in runtime for the greedy search and by  \textbf{95\%} for the full search.
This can be explained the reduction in search space, because over 99\% less value functions need to be computed
as well as 99\% less partitions for the exhaustive search.
This gain is partially offset by the interaction test, which introduces an overhead that grows quadratically with the number of variables,
but is asymptotically offset by the exponential amount of value functions/ super-exponential amount of partitions.
Finally, we compare the average $\mathit{F1}$ score with interaction testing against without.
The interaction test improves the $\mathit{F1}$ score by \textbf{47\%} for the greedy search and \textbf{20\%} for the full search.
This shows that the interaction test is crucial to obtain accurate explanations, especially for the greedy search.

\section{\nshap Values Table for Bike Sharing and Covid-19 Explanations}
\label{ap:nshap}
We show an excerpt of the \nshap values for the bike sharing in Table \ref{tab:bike}
 and Covid-19 examples in Table \ref{tab:covid}.
The full tables are available in csv format in the Supplementary Material.

\begin{figure}
\begin{minipage}[t]{0.4\linewidth}
    \centering
    \begin{tabular}{cc}
        \hline Features & \nshap-Value \\ \hline
    \csvreader[
    separator=semicolon,
    late after line=\\ \hline,
    filter expr={
    test{\ifnumless{\thecsvinputline}{55}}
    }
    ]{expres/bike_qualitative.csv}{}{\csvcoli & \csvcolii}
    \dots
    \end{tabular}
\end{minipage}
\hfill
\begin{minipage}[t]{0.4\linewidth}
    \centering
    \begin{tabular}{cc}
    \hline Features & \nshap-Value \\ \hline
    \csvreader[
    separator=semicolon,
    late after line=\\ \hline,
    filter expr={test{\ifnumgreater{\thecsvinputline}{55}} and test{\ifnumless{\thecsvinputline}{109}}}
    ]{expres/bike_qualitative.csv}{}{\csvcoli & \csvcolii}
    \dots
    \end{tabular}
\end{minipage}
    \caption{\nshap values for bike sharing prediction.}
    \label{tab:bike}
\end{figure}

\begin{figure}
    \begin{minipage}[t]{0.38\linewidth}
        \centering
        \begin{tabular}{cc}
            \hline Features & \nshap-Value \\ \hline
        \csvreader[
        separator=semicolon,
        late after line=\\ \hline,
        filter expr={
        test{\ifnumless{\thecsvinputline}{54}}
        }
        ]{expres/covid_nshap_complete.csv}{}{\csvcoli & \csvcolii}
        \dots
        \end{tabular}
    \end{minipage}
    \hfill
    \begin{minipage}[t]{0.6\linewidth}
        \centering
        \begin{tabular}{cc}
        \hline Features & \nshap-Value \\ \hline
        \csvreader[
        separator=semicolon,
        late after line=\\ \hline,
        filter expr={test{\ifnumgreater{\thecsvinputline}{54}} and test{\ifnumless{\thecsvinputline}{107}}}
        ]{expres/covid_nshap_complete.csv}{}{\csvcoli & \csvcolii}
        \dots
        \end{tabular}
    \end{minipage}
    \caption{\nshap values for covid-19 survival prediction.}
        \label{tab:covid}
    \end{figure}

\end{document}